\newcommand{\bs}[1]{\ensuremath{{\boldsymbol{#1}}}}
\def\rank{\mathop{\rm rank}\nolimits}
\def\diag{\mathop{\rm diag}\nolimits}
\newtheorem{lem}{Lemma}
\newtheorem{proof}{Proof}
\begin{document}
	
	\title{Forward Kinematics of Object {\color{black} Transporting} by a Multi-Robot System with {\color{black} a} Deformable Sheet~\thanks{This work was supported in part by the National Natural Science Foundation of China (U1813224). The code of the proposed forward kinematics method is publicly available at \url{https://github.com/sjtuhjw/VVCM_ForwardKinematics}. (\textit{Corresponding author: Zhenhua Xiong, Jingang Yi}).}}
	
	\author{Jiawei~Hu, Wenhang Liu\thanks{J. Hu, W. Liu and Z. Xiong are with the School of Mechanical Engineering, Shanghai Jiao Tong University, Shanghai, China. (e-mail: hu\_jiawei@sjtu.edu.cn; liuwenhang@sjtu.edu.cn; mexiong@sjtu.edu.cn).}, Jingang Yi\thanks{J. Yi is with the Department of Mechanical and Aerospace Engineering, Rutgers University, Piscataway, NJ 08854 USA. (e-mail: jgyi@rutgers.edu).}, and Zhenhua~Xiong}
	
	\maketitle
	
	\begin{abstract}
		We present object handling and {\color{black} transporting} by a multi-robot team with a deformable sheet as a carrier. Due to the deformability of the sheet and the high dimension of the whole system, it is challenging to clearly describe all the possible positions of the object on the sheet for a given formation of the multi-robot system. A complete forward kinematics (FK) method is proposed for object handling by an $N$-mobile robot team with a deformable sheet. Based on the virtual variable cables model {\color{black}(VVCM)}, a constrained quadratic problem (CQP) is formulated by combining the geometric constraints and minimum potential energy conditions of the system. Analytical solutions to the CQP are presented and then further verified with the force closure condition. {\color{black} We present an FK algorithm based on the FK method to obtain all possible solutions with the given initial sheet shape and the robot team formation. We demonstrate the effectiveness, completeness, and efficiency of the FK algorithm with experimental results and case study examples.}
	\end{abstract}
	
	\begin{IEEEkeywords}
		Object handling and {\color{black} transporting}, multi-robot system, forward kinematics, deformable sheet
	\end{IEEEkeywords}

	\section{Introduction} 
	\label{sec1}
	
	As a common handling carrier, a deformable sheet can be easily used to hold the object steadily, while the operators can apply force to hold the sheet at multiple locations. For example, rectangular bed sheets are often used in hospitals when transferring patients, where multiple persons hold the sheet at corners~\cite{RoyTASE2005}. Instead of human operators, mobile robots are used to hold the deformable sheet to handle and transport objects~\cite{hunte2019collaborative}. Due to the highly deformable sheet~\cite{bai2016dexterous}, robotic manipulation of the sheet is a challenging problem~\cite{zhu2022challenges}. A simplified model of the deformable sheet and multi-robot manipulation of deformable objects have been used in~\cite{mcconachie2020manipulating,herguedas2019survey}. Hunte and Yi~\cite{hunte2019collaborative} proposed a sheet-handling system using three mobile robots that held and supported the sheet vertices. In~\cite{hunte2021collaborative}, a geometric link model of the sheet-object kinematic relationship was proposed for a three-robot team to transport an object to follow a given trajectory. The recent work in~\cite{hunte2022pose} further extended the sheet-object kinematic model to include the rotational motion of a spherical-shaped object for pose manipulation. However, these work only considered three mobile robots and as the number of robots increases, the transported object {\color{black} might} have multiple equilibrium states on the sheet and the above-mentioned results cannot be directly applied.
	
	Inspired by the cable suspended robots (CSRs)~\cite{capua2011spiderbot}, {\color{black}the VVCM} was proposed in~\cite{hu2022multi}, and by using VVCM, the robots-sheet-object interactions can be viewed as a {\color{black}robots-cables-object} system. The effectiveness of the VVCM approach was verified through simulations and object transporting experiments. In order to maintain the stability of the object during the handling process, all virtual cables were assumed to be straight and taut. As the cable in CSRs can be slack~\cite{capua2009motion}, the virtual cable in the VVCM therefore has a slack state. With different numbers of taut cables under the same robot formation, the {\color{black} transporting} object {\color{black} might} have multiple equilibrium states. The object position at each equilibrium state is obtained by the forward kinematics (FK) method for the multi-robot system. As the number of robots increases, the number of combinations of possible taut cable configurations exponentially increases and not all combinations {\color{black}generate} valid FK solutions. Therefore, it is a challenging task to effectively and efficiently determine the equilibrium states of the {\color{black} object on deformable sheet held by the multi-robot system}.
	
	The concept of taut cable in VVCM corresponds to the positive cable tension in the field of CSRs~\cite{oh2005cable} and cable-driven parallel robots (CDPRs)~\cite{qian2018review}. Inspired by the modeling method of CSRs and CDPRs, we {\color{black} summarize} three conditions that need to be satisfied to solve the FK problem of VVCM. The first condition is the form closure condition that satisfies the geometric constraints of the cable length~\cite{carricato2012stability}. For CSRs and CDPRs, the length of the cable can be actively controlled~\cite{pott2010algorithm, an2022novel} or be fixed, and the object was manipulated by changing the robotic formation~\cite{merlet2021maximal}. These configurations are different from VVCM, in which the length of the cable is constrained by the initial shape of the sheet. The second condition is the minimum potential energy of the system under the quasi-static condition~\cite{collard2013computing}. We obtain the free energy of the robots-sheet-object system based on~\cite{hunte2019collaborative}, and construct a constrained quadratic problem (CQP) {\color{black} that is} combined with geometric constraints. The third condition is to satisfy the force closure condition~\cite{diao2007method} and the tension of the cable should be non-negative~\cite{pham2006force}. 
	
	Based on the {\color{black} above-mentioned} three conditions, we propose a novel FK method of {\color{black} the} multi-robot system with a deformable sheet. We first transform the geometric constraints from quadratic equations into linear equations, and the form closure condition is checked by the rank of the augmented matrix. {\color{black}After the form closure condition is satisfied, the CQP is proposed by combining the geometric constrains and minimum potential energy of the system.} In order to obtain a general solution for CQP, the Lagrange multiplier method (e.g.,~\cite{fletcher2000practical}) is adopted and the force closure condition is checked to obtain physically feasible solutions. {\color{black}A computational FK algorithm is proposed to obtain all valid solutions with possible taut cable combinations. The effectiveness of the FK method and algorithm is verified experimentally through a four-robot system and the completeness and efficiency of the FK algorithm are verified through multiple case study examples.} The main contribution of this paper is the novel computational FK method and algorithm of a multi-robot system with a deformable sheet. The proposed FK algorithm can be further extended for real-time multi-robot control for object handling and transporting with a deformable sheet. 
	
	The remainder of this paper is organized as follows. The system configuration and the problem statement are given in Section~\ref{sec_2}. In Section~\ref{sec_3}, the complete FK method and algorithm are proposed. {\color{black} Experimental results and case study are presented in Section~\ref{sec_4}. Finally, Section~\ref{sec_5} summarizes the conclusion and discusses future research directions.}
	
	\section{System Configuration and Problem Statement} 
	\label{sec_2}
	
	We consider that $N$ mobile robots hold a deformable sheet to handle and transport an object, {\color{black} where $N \geq 3$, $N \in \mathbb{N}$}. Fig.~\ref{fig_setup}(a) illustrates the basic configuration of the robotic system. A team of $N$ mobile robot holds a flexible sheet $\mathcal{S}$ at points $\bs{p}_i$, $i=1,\ldots,N$, and object $O$ is sit on $\mathcal{S}$. An inertial coordinate system $\mathcal{W}$ is setup with the $Z$-axis upward. The planar position of each robot is denoted as $\bm{r}_i= [x_i \; y_i]^T$, $i=1,\ldots,N$, in $\mathcal{W}$. The robot formation is denoted as $\mathcal{R}_N = [{\bm{r}}_1 \, \cdots \, {\bm{r}}_N]^T \in \mathbb{R}^{N \times 2}$. The position of the robot-sheet holding point is denoted as $\bm{p}_i = [\bm{r}_i^T \; z_r]^T$, where $z_r$ is the a constant height for all holding points, as shown in Fig.~\ref{fig_vvcm}. {\color{black} We mainly consider the same height of the holding points for all robots for simplicity and the results in this work can be readily extended to different heights of holding points.}
	
	\begin{figure}[h!] 
		\vspace{-1mm}
		\centering
		\includegraphics[width=\columnwidth]{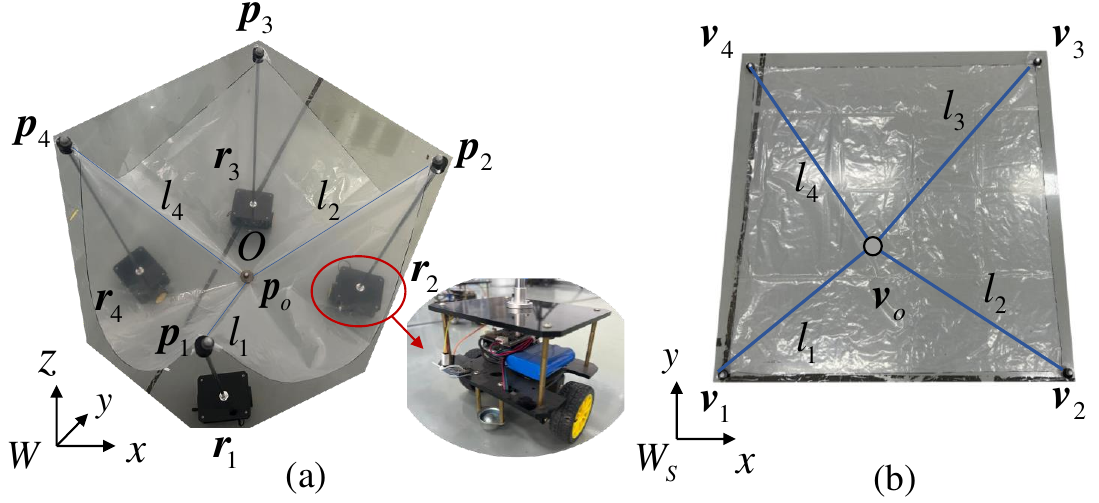}
		\caption{ The system configuration and experimental setup for a four-robot team. (a) Experimental setup and system configuration. Positions of the robot $\bm{r}_i$, the holding point $\bm{p}_i$ and the object $\bm{p}_o$ in {\color{black}$\mathcal{W}$}. (b) The initial shape of the sheet in {\color{black}$\mathcal{W}_S$}. Virtual cables $l_i$ is determined by the contact point $\bm{v}_o$.}
		\label{fig_setup}       
		\vspace{-1mm}
	\end{figure}
	
	For sheet $\mathcal{S}$, the initial shape is a convex polygon with $N$ vertices in the local planar frame $\mathcal{W}_S$ attached to the sheet, as shown in Fig.~\ref{fig_setup}(b). The initial position vector of the vertices in $\mathcal{W}_S$  is denoted as $\mathcal{V}_N^0  = [\bm{v}_1 \, \cdots \, \bm{v}_N]^T \in  \mathbb{R}^{N \times 2}$, where the $i$th vertex's position in $\mathcal{W}_S$ is $\bm{v}_i = [x_{vi} \; y_{vi}]^T, i=1,\ldots,N$. {\color{black}For simplicity for modeling, we still adopt the assumptions in~\cite{hu2022multi}. First, object $O$ is considered as a mass point and the position in $\mathcal{W}$ is denoted as $\bs{p}_o =[\bs{r}_o^T \; z_o]^T$, where $\bs{r}_o =[x_o \; y_o]^T$. The contact between object $O$ and the sheet is considered as a point, and its position in $\mathcal{W}_S$ is denoted as $\bm{v}_o = [x_{vo} \; y_{vo}]^T$. Second,} the motion of the object is quasi-static, that is, the dynamic effects of the particular motion are neglected. {\color{black} Finally,} the deformable sheet is assumed to be inelastic and soft. Based on these {\color{black}three} assumptions, the object moves freely on the sheet under the gravitational force and stays at the position where the system energy is minimal. 
	
	The premise of the convex $N$ polygonal sheet ensures that the line between $\bm{v}_o$ and $\bs{v}_i$ exists and can be viewed as $N$ virtual cables by VVCM. {\color{black}The} length of each cable is denoted as $l_i =  \| \bm{v}_i -\bm{v}_o \|_2$, $i=1, \cdots, N$. When $\bm{v}_o$ changes, virtual cables vary. Since the virtual cables might be taut or slack, as shown in Fig.~\ref{fig_setup}, the Euclidean distance between $\bs{p}_o$ and $\bm{p}_i$ in $\mathcal{W}$ is less than or equal to the corresponding virtual cable length $l_i$ in $\mathcal{W}_S$, that is,
	\begin{equation}
		\label{eq_cable}
		l_i = \| \bm{v}_i -\bm{v}_o \|_2 \geq \| {\bm{p}}_i - {\bm{p}}_o \|_2, \; i=1, \ldots, N,  
	\end{equation}
	{\color{black}where the robot formation should be feasible, i.e. $\| \bm{r}_i - \bm{r}_j\| <\| \bm{v}_i - \bm{v}_j\|$, $i,j=1,\ldots, N, i \ne j$.} Let $\mathcal{I}_t$ denote the index set of all taut cables and its cardinality is denoted as $k=|\mathcal{I}_t|$. {\color{black}Because object $O$ has three degrees of freedom and its position is constrained by the taut cables, at least three taut cables are needed to keep it stationary. Although there might be an equilibrium state with two taut cables, it is not stable and therefore, we {\color{black}consider} $3 \leq k \leq N$ for stable handling.} Fig.~\ref{fig_vvcm} illustrates an example of three possible combinations of taut/slack virtual cables of a 5-robot team under the same formation to hold object $O$ by a deformable sheet, that is,  $\mathcal{I}_t = \{1,2,3,4,5\}$, $\{1,2,3,4\}$, and $\{1,2,4\}$ with $k=5,4,3$, respectively. This example illustrates that different {\color{black}$\mathcal{I}_t$s} under the same robot formation lead to different equilibrium states for object $O$.
	
	\begin{figure}[h!]
		\centering
		\vspace{-1mm}
		\includegraphics[width=\columnwidth]{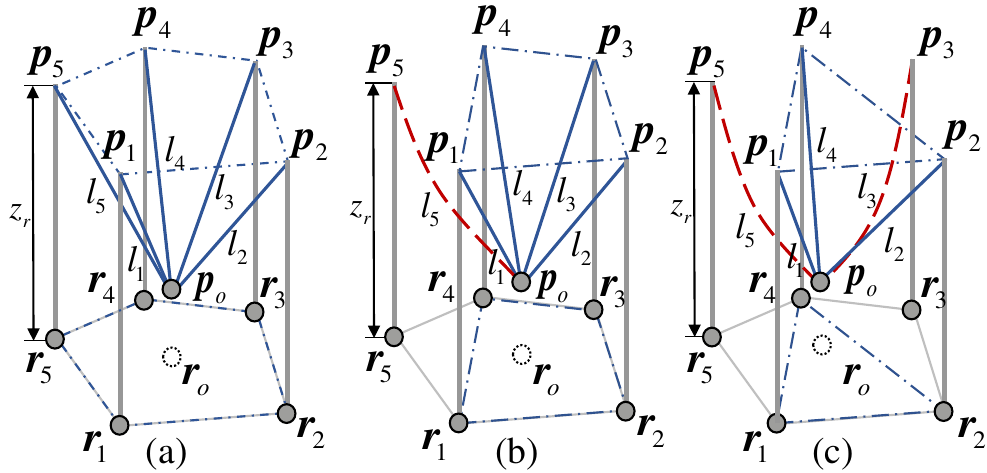}
		\vspace{-3mm}
		\caption{Three possible static equilibrium conditions for a five-robot formation. (a) $\mathcal{I}_t = \{ 1,2,3,4,5\}$ and $k=|\mathcal{I}_t|=5$. (b) $\mathcal{I}_t = \{ 1,2,3,4\}$ and $k=|\mathcal{I}_t|=4$. (c) $\mathcal{I}_t = \{ 1,2,4\}$ and $k=|\mathcal{I}_t|=3$. }
		\label{fig_vvcm}       
	\end{figure}
	
	\noindent {\em Problem Statement}: For the $N$-robot {\color{black}team} with a deformable sheet $\mathcal{S}$ to hold object $O$, the forward kinematics problem is to find all possible  position $\bm{p}_o$ in $\mathcal{W}$ and corresponding $\bm{v}_o$ in $\mathcal{W}_S$ under given a robot formation $\mathcal{R}_N$ (i.e., $\bs{p}_i, i=1,\ldots, N$).
	
	From the above statement, the output of each FK solution consists of five variables $(\bm{p}_o,\bm{v}_o)$, $\bs{p}_o \in \mathbb{R}^3$ and {\color{black}$\bs{v}_o \in \mathbb{R}^2$} along with taut cable set $\mathcal{I}_t$. Since there exist multiple {\color{black}equilibrium states} under given $\mathcal{R}_N$, {\color{black} the FK solutions} include the object position set $\mathbb{P}_o=\{\bm{p}_o\}$, the corresponding sheet position set $\mathbb{V}_o = \{\bm{v}_o\}$, {\color{black} and the collection of the taut cable set} $\mathbb{I}_t = \{ \mathcal{I}_t\}$.

	\section{Forward Kinematics {\color{black}Method} and Algorithm}      
	\label{sec_3}
	{\color{black} 
	The FK method finds and analyzes three conditions, i.e., the form closure, the minimum potential energy, and the force closure conditions. We also propose a computational process that is built on the FK method.}
	
	\subsection{Forward Kinematics {\color{black}Method}}     
	\label{sec_3a}
	
	\subsubsection{Form closure condition} 
	\label{sec_form}
	Without loss of generality, we denote the taut cable index set $\mathcal{I}_t = \left\{i_1, i_2,\ldots, i_k\right\}$, $3 \leq k \leq N$, and the slack cable index set is then $\mathcal{I}_s=\mathcal{I}_N \setminus \mathcal{I}_t=\left\{i_{k+1}, i_{k+2}, \dots, i_N\right\}$, where $\mathcal{I}_N$ is the entire index set of $\mathcal{R}_N$. {\color{black}We separate~\eqref{eq_cable} into two groups by using $\mathcal{I}_t$ and $\mathcal{I}_s$ as   
		\begin{subequations}  \label{eq_geometry}
			\begin{align}
				\text{\hspace{-1mm}}	(x_{v {i_j}} &-x_{vo})^2 + (y_{v {i_j}} - y_{vo})^2= ({x}_{{i_j}} -{x}_o)^2 +({y}_{{i_j}} - {y}_o)^2 \nonumber \\
				& +(z_r -z_o)^2, \, i_j \in \mathcal{I}_t, 	\, j=1, \ldots, k, \, z_o<z_r, \label{eq_geometry1} \\
				\text{\hspace{-1mm}}(x_{v {i_l}} & -x_{vo})^2 + (y_{v {i_l}} - y_{vo})^2 > ({x}_{i_l} -{x}_o)^2 +({y}_{i_l} - {y}_o)^2 \nonumber \\
				&+(z_r -z_o)^2, \,  i_l \in \mathcal{I}_s, 	\, l=k+1, \ldots, N, \, z_o<z_r.   \label{eq_geometry2}
			\end{align}
		\end{subequations} 
		 
		 In VVCM, the position of object $O$ is restricted by the geometric constraints~\eqref{eq_geometry1} of the taut cable set $\mathcal{I}_t$, as shown in Fig.~\ref{fig_vvcm}. Therefore, the form closure condition means that for taut cables in $\mathcal{I}_t$,~\eqref{eq_geometry1} needs to be satisfied, that is, the quadratic equations~\eqref{eq_geometry1} about $\bm{p}_o$ and $\bm{v}_o$ has at least one solution. It is a necessary condition for object $O$ to remain stationary.
		
		We further simplify condition~\eqref{eq_geometry}. Since the purpose is to explore whether there is the solution to~\eqref{eq_geometry1}, without loss of generality, we take the first indexed equation in~\eqref{eq_geometry1}, e.g., $i_1 \in  \mathcal{I}_t$, as
		\begin{align} 	
			(z_r -z_o)^2 = & (x_{v i_1} -x_{vo})^2 + (y_{v i_1} - y_{vo})^2 \nonumber\\
			&- ({x}_{i_1} -{x}_o)^2 - ({y}_{i_1} - {y}_o)^2, \quad z_o<z_r. \nonumber	\label{eq_geo1}   \tag{3a}
		\end{align}
		Then, we subtract~\eqref{eq_geo1} from the remaining $(k-1)$ equations in~\eqref{eq_geometry1} and $(N-k)$ equations in~\eqref{eq_geometry2} and obtain respectively
		\begin{subequations} 	
			\label{eq_geo}
			\vspace{-0mm}
			\begin{align}
				&\text{\hspace{-4mm}}\underbrace{\frac{1}{2}(x_{v{i_1}}^2+y_{v{i_1}}^2- x_{v{i_j}}^2-y_{v{i_j}}^2 - {x}_{i_1}^2-{y}_{i_1}^2 + {x}_{i_j}^2+{y}_{i_j}^2)}_{b_{i_j}} \nonumber\\
				&=\underbrace{\begin{bmatrix} {x}_{i_j} - {x}_{i_1}\\ {y}_{i_j} - {y}_{i_1}\\ {x}_{v{i_1}}-{x}_{v{i_j}}\\ {y}_{v{i_1}}-{y}_{v{i_j}}\end{bmatrix}^T}_{\bm{a}_{i_j}^T}
				\underbrace{\begin{bmatrix} x_o\\ y_o\\ x_{vo}\\ y_{vo}\end{bmatrix}}_{\bm{x}}, \quad i_j \in \mathcal{I}_t,  j =2,\ldots, k,  \label{eq_geo2}    \tag{3b}
			\end{align}
			\vspace{-5mm}
			\begin{align}
				b_{i_l} < \bm{a}_{i_l}^T \bm{x}, \quad i_l \in \mathcal{I}_s,  l =k+1,\ldots, N.  \label{eq_geo3}  \tag{3c}
			\end{align}
		\end{subequations}
		Both~\eqref{eq_geo2} and~\eqref{eq_geo3} are linear with $\bm{x} = [{x}_o \, {y}_o \,  x_{vo} \, y_{vo}]^T \in \mathbb{R}^4$. Once $\bm{x}$ is known, the object height $z_o$ is then obtained by~\eqref{eq_geo1} and the FK problem is resolved. Therefore, we now focus on how to obtain $\bm{x}$. We rewrite~\eqref{eq_geo2} and~\eqref{eq_geo3} into a compact form by using vector forms. We introduce matrices $\bm{A}_1=[\bm{a}_{i_2} \; \cdots \; \bm{a}_{i_k}]^T \in \mathbb{R}^{(k-1)\times 4}$, $\bm{b}_1  =[b_{i_2}	\; \cdots \; b_{i_k}]^T\in \mathbb{R}^{(k-1)}$, $\bm{A}_2=[\bm{a}_{i_{k+1}} \; \cdots \; \bm{a}_{i_N}]^T \in \mathbb{R}^{(N-k)\times 4}$, $\bm{b}_2  =[b_{i_{k+1}}	\; \cdots \; b_{i_N}]^T \in \mathbb{R}^{(N-k)}$ and \eqref{eq_geo2} and~\eqref{eq_geo3} are then expressed respectively as
		\begin{subequations}		\label{eq_Axb}	
			\begin{align}
				\bm{A_1} \bm{x} =\bm{b}_1, \; \label{eq_A1} \\
				\bm{A_2} \bm{x} >\bm{b}_2. \label{eq_A2}
			\end{align} 
		\end{subequations}
		
		From the above derivation, the existence of the solution for~\eqref{eq_geometry1} is simplified to the existence of the solution for~\eqref{eq_A1}. Defining augmented matrix $\bar{\bm{A}}_1= [\bm{A}_1 \; \bm{b}_1] \in \mathbb{R}^{(k-1) \times 5}$, the necessary and sufficient condition for~\eqref{eq_A1} to have the solution is given by
		\begin{equation} 
			\label{eq_rank}
			\rank(\bm{A}_1) = \rank(\bar{\bm{A}}_1), 
		\end{equation}
		where $\rank(\cdot)$ represents the rank of a matrix. We finally obtain the mathematical expression~\eqref{eq_rank} of the form closure condition. Note that the final result of~\eqref{eq_rank} is independent of the choice of index $i_1$ and we obtain the similar results by choosing another taut cable index.} The computational complexity of the judgment condition~\eqref{eq_rank} is $O(k^2)$ at worst case. The taut cable set $\mathcal{I}_t$ that do not meet the condition can be directly eliminated and therefore, this condition greatly saves the amount of subsequent computations.
	
	\subsubsection{Minimum potential energy condition}   
	\label{sec_energy}
	{\color{black}When the system with taut cable group $\mathcal{I}_t$ satisfies the form closure condition, we then need to find the minimum potential energy condition of the system to solve $\bm{x}$ in equilibrium state.} Considering the gravitational force of the object on sheet $\mathcal{S}$, the free-energy $\Gamma$ of the system can be written as
	\begin{equation} 
		\label{eq_energy}
		\Gamma(\bm{x}) = \iint \limits_{\mathcal{S}} W(\bm{V}) dx dy - m_o \bm{g} \cdot {\bm{p}}_o,
	\end{equation}
	where $\bm{V} \in \mathbb{R}^{2 \times 2}$ is the metric tensor of the deformed sheet. $W: SO(2) \to \mathbb{R}$ is the strain energy function, $m_o$ is the mass of object $O$, $\bm{g} = [0 \;0 \;-g]^T$ and $g$ is the gravitational constant. The strain energy function {\color{black} is} expressed as $W(\bm{V}) = \frac{E}{2} |\bm{\epsilon}|^2$, $\bm{\epsilon} = \sqrt{\bm{V}} - \bm{I}$~\cite{hunte2019collaborative}, where $\bm{I}$ is the identity matrix, $E > 0$ is considered as elastic modulus of sheet $\mathcal{S}$ and $\bm{\epsilon}$ is the strain. Since $S$ is inelastic and flexible, $\bm{V} = \bm{I}$. Thus, $\Gamma(\bm{x})$ in~\eqref{eq_energy} {\color{black}is} rewritten as
	\begin{equation} \label{eq_G}
		\Gamma(\bm{x}) = m_o g z_o,
	\end{equation}
	where $z_o$ can be represented by $\bm{x}$ by~\eqref{eq_geo1}. The object always moves in the direction of the lowest potential energy and eventually comes to rest. The equilibrium position is determined and obtained by minimizing $\Gamma(\bm{x})$ with {\color{black}the geometric constraints~\eqref{eq_geo}.} By observing~\eqref{eq_G}, when $\Gamma(\bm{x})$ reaches its minimum, so does $z_o$. {\color{black}Therefore, using~\eqref{eq_geo1}, we construct a quadratic objective function $f(\bm{x})$ as}
	\begin{align} 	
		\label{eq_f}
		\text{\hspace{-3mm}} f(\bm{x}) =& -(z_r - z_o)^2=({x}_{i_1} -{x}_o)^2 + ({y}_{i_1}- {y}_o)^2 - (x_{v i_1}    \nonumber \\
		& -x_{vo})^2- (y_{v i_1} - y_{vo})^2 =\frac{1}{2} \bm{x}^T \bm{H} \bm{x} + \bm{c}^T \bm{x} + f_0, 
	\end{align}
	where $0 <z_o< z_r$, $\bm{H} = \diag(2,2,-2,-2)$, $\bm{c} = [-2{x}_{i_1} \;-2{y}_{i_1} \; 2x_{v{i_1}} \; 2y_{v i_1}]^T$, $f_0 = {x}_{i_1}^2 + {y}_{i_1}^2 - x_{v {i_1}}^2 - y_{v {i_1}}^2$. Given $\mathcal{I}_t$, the solution of $\bm{x}$ under the condition of minimum potential energy {\color{black} is} regarded as solving the following {\color{black}CQP}.
	\begin{eqnarray}		
		\label{eq_CQP}
		\min_{\bs{x}} & f(\bm{x}) = \frac{1}{2} \bm{x}^T \bm{H} \bm{x} + \bm{c}^T \bm{x} + f_0    \nonumber \notag     \\
		\text{s.t.} &  \bm{A}_1 \bm{x} = \bm{b}_1,  \bm{A}_2 \bm{x} > \bm{b}_2, f(\bm{x}) <0.
	\end{eqnarray}
	
	For~\eqref{eq_CQP}, we first use the Lagrangian multiplier method to solve $\bm{x}$ and then bring the solution into the inequality to determine whether other conditions are satisfied. {\color{black}To obtain the non-singular Lagrangian matrix, we first find the maximum linearly independent equations of~\eqref{eq_A1} by Gaussian elimination. We denote $k_1=\rank(\bm{A}_1)$ and without loss of generality, we further partition the matrices $\bm{A}_1=[\bm{A}_{11}^T \; \bm{A}^T_{12}]^T$ and $\bm{b}_1=[\bm{b}^T_{11} \; \bm{b}^T_{12}]^T$ such that~\eqref{eq_A1} reduces to
		\begin{equation}  
			\bm{A}_{11} \bm{x} =\bm{b}_{11},  \label{eq_A11}
		\end{equation}
		where $\bm{A}_{11} \in \mathbb{R}^{k_1 \times 4}$, $\bm{b}_{11} \in \mathbb{R}^{k_1}$ and $\rank(\bm{A}_{11})=k_1$, that is, $\bm{A}_{11}$ is a row full rank matrix. The number of the maximum taut cables by condition in~\eqref{eq_A1} is $(k_1+1)$. Note that $2 \leq k_1 \leq 4$ and thus, $k_1 = 2$, $3$, or $4$. As an example, Fig.~\ref{fig_regular} shows a four-robot team to form a square shape, where $k =4$ {\color{black}and} $k_1=2$, and the number of independent taut cables is $3$.} 
	
	\begin{figure}[h!] 
		\centering
		\includegraphics[width=\columnwidth]{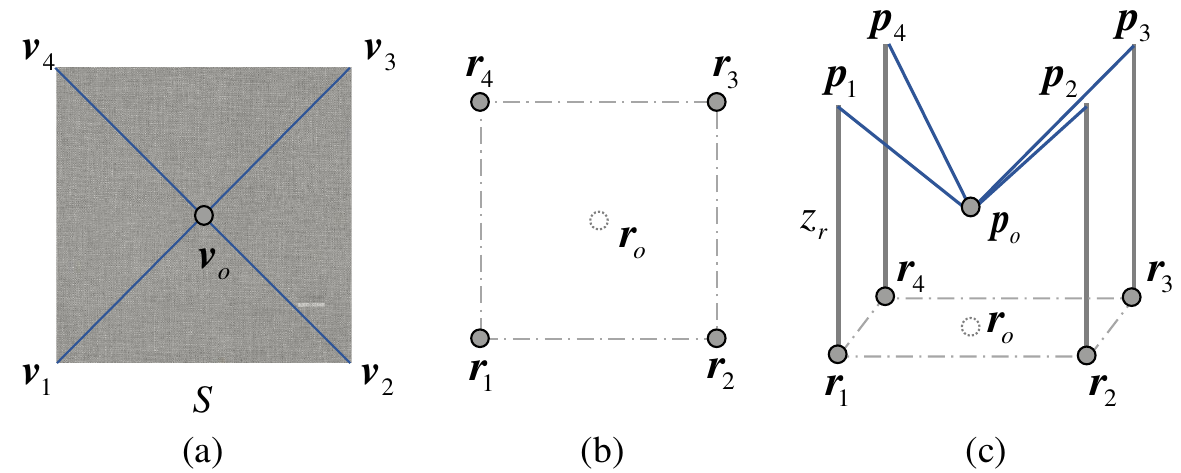}
		\caption{A robot system that both the initial sheet shape and robot formation are square. The taut cable number is four, i.e., $k=4$, but the number of maximum linearly independent equations {\color{black}for~\eqref{eq_A11} is $k_1=2$}, that is, \color{black}{we only need to use the constraints of three independent taut cables ($k_1+1=3$). (a) Initial sheet shape and the contact point. (b) Robot formation. (c) System configuration.}}
		\label{fig_regular}  	
	\end{figure}
	
	{\color{black} Once~\eqref{eq_A11} is satisfied, so is~\eqref{eq_A1} and the Lagrange function of~\eqref{eq_CQP} can be defined as}
	\begin{equation} 
		\label{eq_Langrange}
		L(\bm{x}, \bm{\lambda}) = \frac{1}{2}\bm{x}^T \bm{H} \bm{x} + \bm{c}^T \bm{x} - \bm{\lambda}^T(\bm{A}_{11} \bm{x} - \bm{b}_{11}),
	\end{equation}
	where vector {\color{black}$\bm{\lambda} \in \mathbb{R}^{k_1}$}. The optimal solution of $L(\bm{x}, \bm{\lambda})$ needs to satisfy the first-order conditions
	\begin{equation*} 
		\nabla_{\bm{x}}	L(\bm{x}, \bm{\lambda}) = \bm{0}, \quad \nabla_{\bm{\lambda}}	L(\bm{x}, \bm{\lambda}) = \bm{0}.
	\end{equation*}
	Then, we obtain 
	\begin{equation} 
		\label{eq_L}
		\underbrace{\begin{bmatrix} \bm{H}	& -\bm{A}_{11}^T \\	-\bm{A}_{11} & \bm{0}\end{bmatrix}}_{\bs{L}}	\begin{bmatrix}	\bm{x} \\		\bm{\lambda}\end{bmatrix} =	\begin{bmatrix}	-\bm{c} \\	-\bm{b}_{11}\end{bmatrix}, 
	\end{equation}
	where Lagrange matrix $\bm{{L}} \in \mathbb{R}^{\color{black} (k_1+4) \times (k_1+4)}$. The solution of CQP~\eqref{eq_CQP} {\color{black} is} obtained by multiplying both sides of~\eqref{eq_L} with $\bm{L}^{-1}$ provided the following fact.
	\begin{lem}
		\label{pro_L}
		The Lagrange matrix $\bm{L}$ in~\eqref{eq_L} is full rank and therefore invertible. 
	\end{lem}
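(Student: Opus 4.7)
The plan is to exploit the saddle-point block structure of $\bm{L}$ to reduce invertibility to a small Schur-complement claim, and then settle the reduced question using the feasibility hypothesis on the robot formation stated just after~\eqref{eq_cable}. The key observation is that $\bm{H}=\diag(2,2,-2,-2)$ is manifestly invertible with $\bm{H}^{-1}=\tfrac{1}{2}\diag(1,1,-1,-1)$; a single block-row elimination therefore gives
\begin{equation*}
\bm{L}=\begin{bmatrix}\bm{I}_4 & \bm{0}\\ -\bm{A}_{11}\bm{H}^{-1} & \bm{I}_{k_1}\end{bmatrix}\begin{bmatrix}\bm{H} & -\bm{A}_{11}^T\\ \bm{0} & -\bm{A}_{11}\bm{H}^{-1}\bm{A}_{11}^T\end{bmatrix},
\end{equation*}
so $\det\bm{L}=\det\bm{H}\cdot\det(-\bm{A}_{11}\bm{H}^{-1}\bm{A}_{11}^T)$ and the lemma reduces to showing that the $k_1\times k_1$ Schur complement $\bm{S}:=\bm{A}_{11}\bm{H}^{-1}\bm{A}_{11}^T$ is nonsingular.

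Next I would expose the geometric content of $\bm{S}$ by splitting each row of $\bm{A}_{11}$ into its robot-side and vertex-side halves, writing $\bm{A}_{11}=[\bm{R}\;\;\bm{V}]$ with $\bm{R},\bm{V}\in\mathbb{R}^{k_1\times 2}$ where the rows of $\bm{R}$ and $\bm{V}$ record the offsets $\bm{r}_{i_j}-\bm{r}_{i_1}$ and $\bm{v}_{i_1}-\bm{v}_{i_j}$ respectively. A direct computation then yields $\bm{S}=\tfrac{1}{2}(\bm{R}\bm{R}^T-\bm{V}\bm{V}^T)$, whose $(j,j)$ entry equals $\tfrac{1}{2}(\|\bm{r}_{i_j}-\bm{r}_{i_1}\|^2-\|\bm{v}_{i_j}-\bm{v}_{i_1}\|^2)$ and is strictly negative by the feasibility inequality. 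The case $k_1=4$ is then immediate, because $\bm{A}_{11}$ is a $4\times 4$ invertible matrix and $\det\bm{S}=\det^2(\bm{A}_{11})\cdot\det(\bm{H}^{-1})\neq 0$.

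The main obstacle is the cases $k_1\in\{2,3\}$, where $\bm{S}$ is generically indefinite so a simple definiteness argument does not apply directly; a quick three-row numerical example shows $\bm{S}$ can have two negative and one positive eigenvalue while still being nonsingular. Here my plan is to invoke the Cauchy--Binet identity to write $\det\bm{S}$ as a signed sum of squared $k_1\times k_1$ minors of $\bm{A}_{11}$ weighted by products of diagonal entries of $\bm{H}^{-1}$. The full-row-rank hypothesis on $\bm{A}_{11}$ guarantees that at least one such squared minor is nonzero, and the sign pattern of $\bm{H}^{-1}$ attaches the negative weights precisely to column subsets that are vertex-heavy, whose contributions are magnified by the feasibility gap $\|\bm{v}_i-\bm{v}_j\|>\|\bm{r}_i-\bm{r}_j\|$. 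Converting this intuition into a rigorous strict inequality between the two competing sums of squared minors is the delicate point of the argument; once done, $\det\bm{S}\neq 0$ and therefore $\bm{L}$ is full rank.
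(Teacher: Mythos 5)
Your reduction is set up correctly: the block elimination giving $\det\bm{L}=\det\bm{H}\cdot\det(-\bm{A}_{11}\bm{H}^{-1}\bm{A}_{11}^T)$ checks out, as does $\bm{S}:=\bm{A}_{11}\bm{H}^{-1}\bm{A}_{11}^T=\tfrac12(\bm{R}\bm{R}^T-\bm{V}\bm{V}^T)$, the negativity of its diagonal under the feasibility inequality, and the $k_1=4$ case. But the proof is not complete: for $k_1\in\{2,3\}$ you never establish $\det\bm{S}\neq 0$, and the Cauchy--Binet step you defer ("the delicate point") is a genuine gap rather than a routine verification. By Cauchy--Binet the robot-side and sheet-side column subsets enter $\det\bm{S}$ with opposite signs, so the sum is not sign-definite, and the ingredients you allow yourself ($\rank(\bm{A}_{11})=k_1$, invertibility of $\bm{H}$, and the pairwise gap $\|\bm{r}_i-\bm{r}_j\|<\|\bm{v}_i-\bm{v}_j\|$) genuinely do not force it to be nonzero. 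Concretely, writing the rows as $\bm{a}_{i_j}=[\bm{u}_j^T\;\bm{w}_j^T]$ with $\bm{u}_j=\bm{r}_{i_j}-\bm{r}_{i_1}$, $\bm{w}_j=\bm{v}_{i_1}-\bm{v}_{i_j}$, take $\bm{u}_2=(1,0)$, $\bm{u}_3=(0,1)$, $\bm{w}_2=(\sqrt3,0)$, $\bm{w}_3=(-2/\sqrt3,\sqrt{5/3})$: the two rows are linearly independent, $\|\bm{u}_j\|<\|\bm{w}_j\|$ for $j=2,3$ and $\|\bm{u}_2-\bm{u}_3\|=\sqrt2<\sqrt{10}=\|\bm{w}_2-\bm{w}_3\|$, so every pairwise feasibility inequality among the three robots holds, yet $\bm{S}=\left[\begin{smallmatrix}-1&1\\1&-1\end{smallmatrix}\right]$ is singular. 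Hence "vertex-heavy minors dominate" cannot be proved from the hypotheses you invoke; any correct completion must import further structure of an actual taut-cable equilibrium (or additional assumptions), not just full row rank plus the pairwise length inequalities.

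For comparison, the paper's own proof does not touch the Schur complement at all: it counts row ranks, arguing that the first four rows of $\bm{L}$ are independent because $\rank(\bm{H})=4$ and the last $k_1$ rows are independent because $\rank(\bm{A}_{11})=k_1$, and concludes joint full rank. That is much shorter, but it implicitly assumes that independence of the two row blocks separately yields independence jointly, which is automatic when $\bm{H}$ is definite but not for the indefinite $\bm{H}=\diag(2,2,-2,-2)$ used here; your factorization makes explicit that joint full rank is exactly nonsingularity of $\bm{S}$ (equivalently of the reduced Hessian on the null space of $\bm{A}_{11}$), the same quantity whose inverse appears in~\eqref{eq_L_inv}. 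So your framework is the right one and it exposes the real crux, but as written it stops precisely where the difficulty begins; to close the gap you would need to identify which property of feasible taut-cable configurations excludes degenerate data like the example above, or else strengthen the lemma's hypotheses accordingly.
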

	\begin{proof}
		We need to prove that {\color{black}$\text{rank}(\bm{L}) = k_1+4$.} We analyze the row rank of $\bm{L}$. Since $\rank(\bm{H}) = 4$, the first four rows of $\bm{L}$ are full rank. Therefore, $\bm{L}$ is full rank if and only if $[-\bm{A}_{11} \quad \bm{0}]$ is row full rank, namely, {\color{black}$\rank([-\bm{A}_{11} \quad \bm{0}]) = k_1$}, which is true by definition of $\bm{A}_{11}$, and {\color{black}$\rank(\bm{A}_{11})=k_1$. Thus, $\bm{L}$ is full rank and invertible.}
	\end{proof}
	
	Indeed, the inverse matrix of $\bm{{L}}$ is further obtained as follows. 
	\begin{equation}  
		\label{eq_L_inv}
		\bm{{L}}^{-1} =	\begin{bmatrix}	\bm{H}	& -\bm{A}_{11}^T \\	-\bm{A}_{11} & \bm{0}	\end{bmatrix}^{-1}=	\begin{bmatrix}	\bm{B} 	& -\bm{C}^T \\	-\bm{C}	& \bm{D} \end{bmatrix},
	\end{equation}
	where $\bm{B}=\bm{H}^{-1} - \bm{H}^{-1} \bm{A}_{11}^T (\bm{A}_{11}\bm{H}^{-1} \bm{A}_{11}^T)^{-1}\bm{A}_{11}\bm{H}^{-1}$, $\bm{C} =(\bm{A}_{11}\bm{H}^{-1} \bm{A}_{11}^T)^{-1}\bm{A}_{11}\bm{H}^{-1}$, $\bm{D}=- (\bm{A}_{11}\bm{H}^{-1} \bm{A}_{11}^T)^{-1}$. With~\eqref{eq_L_inv} and~\eqref{eq_L}, the value of $\bm{x}$ and $\bm{\lambda}$ are obtained as. 
	\begin{equation} 
		\bm{x}=-\bm{B}\bm{c} +  \bm{C}^T \bm{b}_{11}, \, \bm{\lambda}=\bm{C}\bm{c}	 -\bm{D}\bm{b}_{11}.  \label{eq_xbar}
	\end{equation}
	
	Since the maximum order of $\bm{L}$ is $8$, the computational complexity of obtaining solution is $O(1)$. {\color{black}Note that $\bm{x}$ is determined by $\bm{A}_{11}$, $\bm{b}_{11}$ and $\bm{c}$, and if for different taut cable constraints~\eqref{eq_A1}, the same~\eqref{eq_A11} exists and the same $\bm{x}$ is obtained by~\eqref{eq_xbar}.} After obtaining $\bm{x}$, if $\bm{A_2}\bm{x}>\bm{b}_2$, $f({\bm{x}}) < 0$ also satisfies, $\bm{x}$ is then the solution of CQP~\eqref{eq_CQP}. $z_o$ is obtained by \eqref{eq_f} and due to $z_r > z_o$, $z_o$ is expressed as
	\begin{equation} 
		\label{eq_zo}
		z_o = z_r - \sqrt{-f(\bm{x})},
	\end{equation}
	where $z_o > 0$ needs to be satisfied; otherwise the object is in contact with the ground. {\color{black} We} obtain the FK solution $\bm{p}_o$ and $\bm{v}_o$ when the taut cable set $\mathcal{I}_t$ is given. 
	
	\subsubsection{Force closure condition}       
	\label{sec_force}
	The last step is to verify whether the solution obtained in the previous section is feasible physically. When $k$ cables are taut, the force closure condition is given by
	\begin{equation}    
		\label{eq_force}
		\sum_{i=1}^{k} F_i \bm{\tau}_i= -m_o \bm{g},
	\end{equation}
	where $F_i \geq 0$ is the magnitude of the tension force of the $i$th cable,  $\bm{\tau}_i = \frac{{\bm{p}}_i -{\bm{p}}_o}{l_i}$ is {\color{black}the unit vector along the $i$th cable,} $i \in \mathcal{I}_t$. The left side of~\eqref{eq_force} is the non-negative linear combination of $\bm{\tau}_i$, $i=1,\ldots,k$, which is known as the cone combination. $\bm{\tau}_i$ and all its cone combination form a convex cone $\mathcal{C}$ with the origin at the object $O$. Condition~\eqref{eq_force} holds if and only if cone $\mathcal{C}$ contains $- m_o\bm{g}$~\cite{fenchel1953convex}. Therefore, we {\color{black} obtain} the criterion that if the projected point ${\bm{r}}_o$ of object $O$ on the $XY$ plane is within the polygon formed by robots $\mathcal{R}_k = [\bm{r}_1 \, \cdots \, \bm{r}_k]^T$ corresponding to the taut cables group $\mathcal{I}_t$, a set of $F_i$ can be then found to satisfy~\eqref{eq_force}. If this $\mathcal{R}_k$ positioning condition is not {\color{black} satisfied}, the forces on the $XY$ plane cannot be balanced. 
	
	\subsection{Taut Cable Number Analysis}  
	\label{sec_3b}
	
	{\color{black}The geometric constraints~\eqref{eq_A1} of the taut cable set $\mathcal{I}_t$ solves four unknowns $x_o$, $y_o$, $x_{vo}$ and $y_{vo}$ by $(k-1)$ equations.} When $k \leq 5$, the number of equations is no more than or equals to four. Therefore, the system either has one solution or an infinite number of solutions, while the form closure condition~\eqref{eq_rank} is also satisfied. When $k>5$, the number of equations is more than four and this leads to a no-solution situation. {\color{black}In this case,~\eqref{eq_rank} is difficult to satisfy. Therefore, obtaining FK solutions with more than five taut cables requires selection of $\mathcal{V}_N^0$ and $\mathcal{R}_N$ so that~\eqref{eq_rank} can be satisfied. We here present a special case with all cables taut. General configuration design involves inverse kinematics analysis, which is quite complex. We will demonstrate the case study by the results in the following lemma and present a glimpse of the complexity of inverse kinematics (IK) problem in Section~\ref{sec_4d}.}
	
	\begin{lem} 
		\label{pro_regular}
		For an $N$-robot system with a deformable sheet $\mathcal{S}$, the robotic system has only one solution with all the virtual cables taut, where the robot formation is a regular $N$-side polygon. The contact point $\bm{v}_o$ is at the center of $\mathcal{S}$ with the projected position $\bm{r}_o$ is at the center of the robot formation. 
	\end{lem}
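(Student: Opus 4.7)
My plan is to combine a symmetry-driven ansatz with a Lagrangian analysis on the reduced feasible set. I read the lemma as implicitly assuming the initial sheet is also a regular $N$-gon (as in Fig.~\ref{fig_regular}), so that ``the center of $\mathcal{S}$'' is unambiguous. I would place both polygons centered at the origins of their respective frames, writing $\bm{v}_i = R_s(\cos\theta_i, \sin\theta_i)$ and $\bm{r}_i = R_r(\cos\theta_i, \sin\theta_i)$ with $\theta_i = 2\pi(i-1)/N$, and $R_s > R_r$ by the feasibility condition stated beneath~\eqref{eq_cable}.

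\emph{Existence.} Direct substitution verifies that $\bm{r}_o = \bm{0}$, $\bm{v}_o = \bm{0}$, and $z_o = z_r - \sqrt{R_s^2 - R_r^2}$ satisfy all $N$ equations in~\eqref{eq_geometry1} with $l_i = R_s$ for every $i$. The force-closure test of Section~\ref{sec_force} passes immediately because $\bm{r}_o$ coincides with the centroid of the regular robot polygon.

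\emph{Uniqueness.} Since $\|\bm{v}_{i_j}\|^2 = R_s^2$ and $\|\bm{r}_{i_j}\|^2 = R_r^2$ are independent of $j$, fixing $i_1 = 1$ makes every $b_{i_j}$ in~\eqref{eq_geo2} vanish, so~\eqref{eq_A1} reduces to the homogeneous system $\bm{A}_1 \bm{x} = \bm{0}$. A short computation shows each row of $\bm{A}_1$ equals $\alpha_j(-R_r, 0, R_s, 0) + \beta_j(0, R_r, 0, -R_s)$ with $\alpha_j = 1 - \cos\theta_j$ and $\beta_j = \sin\theta_j$, so $k_1 = \rank(\bm{A}_1) = 2$ and the null space is parameterized by $\bm{x} = (u, v, uR_r/R_s, vR_r/R_s)^T$. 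Substituting into~\eqref{eq_f}, I expect the linear term $\bm{c}^T \bm{x}$ to cancel because $\bm{r}_{i_1}$ and $\bm{v}_{i_1}$ both lie on the positive $x$-axis, leaving $f(u, v) = (1 - R_r^2/R_s^2)(u^2 + v^2) + R_r^2 - R_s^2$. Under $R_s > R_r$ this is strictly convex with unique minimizer $u = v = 0$, which by Lemma~\ref{pro_L} is also the unique critical point of the Lagrangian system~\eqref{eq_L}. Recovering $z_o$ from~\eqref{eq_zo} finishes the argument.

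The main obstacle is the convexity step. The unconstrained Hessian $\bm{H} = \diag(2, 2, -2, -2)$ is indefinite on $\mathbb{R}^4$, so a generic two-dimensional restriction of $f$ could inherit that indefiniteness and leave $f$ unbounded below. What saves the argument is the precise alignment of the null-space basis with the radial direction: the common factor $R_r/R_s$ rescales the negative block just enough that the residual coefficient $1 - R_r^2/R_s^2$ inherits its sign from the feasibility gap $R_s > R_r$. Losing either the rotational symmetry (which drives $b_{i_j}$ to zero) or the strict inequality $R_s > R_r$ would break this clean factorization.
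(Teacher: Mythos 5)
Your proposal is correct and takes essentially the same route as the paper's proof: both exploit the scaling relation $\bm{r}_i = (r_f/r_s)\bm{v}_i$ to conclude $\bm{b}_1 = \bm{0}$ and $\rank(\bm{A}_1)=\rank(\bar{\bm{A}}_1)=2$, and then minimize over the resulting two-dimensional feasible set to land at the center with $z_o = z_r - \sqrt{r_s^2 - r_f^2}$. Your explicit null-space parameterization, the cancellation of $\bm{c}^T\bm{x}$, and the strict-convexity check of the reduced objective $(1-R_r^2/R_s^2)(u^2+v^2)+R_r^2-R_s^2$ all verify correctly and simply spell out the step the paper compresses into ``by solving~\eqref{eq_CQP}.''
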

	\begin{proof}
		For a regular $N$-side polygon, we obtain that $\bm{v}_i = [r_s \cos \frac{2 \pi i}{N} \; r_s \sin \frac{2 \pi i}{N}]^T$, $\bm{r}_i = [r_f \cos \frac{2 \pi i}{N} \;r_f \sin \frac{2 \pi i}{N}]^T$, $i=1,\ldots,N$, {\color{black}where $r_s$ and $r_f$ are the radii of their circumscribed circles, respectively.} According to~\eqref{eq_Axb}, since $\bm{r}_i  = \frac{r_f}{r_s} \bm{v}_i$, the third and fourth columns of $\bm{A}_1$ are eliminated by the elementary column transformation of the matrix. Thus, for any set of $\mathcal{I}_t$, we obtain $\bm{b}_1 = \bm{0}$, $\rank(\bm{A}_{1})=\rank(\bar{\bm{A}}_{1}) =\rank(\bs{A}_{11})=2$ and this implies $k_1=2$ for all $k$. By solving~\eqref{eq_CQP}, we obtain $\bm{v}_o = \bm{0}$, $\bm{r}_o = \bm{0}$, $z_o = z_r - \sqrt{r_s^2 - r_f^2}$, $\mathcal{I}_t = \mathcal{I}_N$. Therefore, the object has a unique position with all taut virtual cables.
	\end{proof}
	
	\subsection{Forward Kinematics Algorithm} 
	\label{sec_3c}
	{\color{black}Section~\ref{sec_3a} presents the FK method which derives and obtains the FK solution with a given taut cable set $\mathcal{I}_t$.} Algorithm~\ref{Al_FK} summarizes the procedure to find all possible FK solutions. The algorithm searches all possible combinations of taut cables and then finds the possible solutions. {\color{black} For an $N$-robot system, the minimum number of taut cables is three. Although there might be an equilibrium state when the number of taut cables is two, it is not stable.} Therefore, $k\geq 3$ is for stable equilibrium and the total number of taut cable combinations is $\sum_{k=3}^{N} C_{N}^k$. Algorithm~\ref{Al_FK} consists of the following four main steps, and the {\color{black} computational} process is as follows. {\em Step 1}: Select a new taut cable {\color{black}index set} $\mathcal{I}_t$ with $k$ taut cables from $\sum_{k=3}^{N} C_{N}^k$ combinations and this is achieved by function $\texttt{NChooseK}(N,k)$ (line 2). {\color{black}{\em Step 2}: Verify the form closure condition~\eqref{eq_rank} (line 3). If it is satisfied, go to the next step; otherwise return to {Step 1}. {\em Step 3}: Construct and solve the CQP to obtain $(\bs{p}_o,\bs{v}_o)$ by combining the geometric constraints and minimum potential energy of the system (line 4). If $\bs{p}_o$ and $\bs{v}_o$ exist (line 5), go to the next step; otherwise return to {Step 1}.} {\em Step 4}: Verify the force closure condition and the cable tensions should be non-negative (line 6).  If the condition is met, update sets $\mathbb{P}_o$, $\mathbb{V}_o$ and $\mathbb{I}_t$ (line 7), and search for the next $\mathcal{I}_t$; otherwise return to {Step 1} directly.
	
	\begin{algorithm}[h!]
		\label{Al_FK}
			\caption {FK computation with VVCM}
			\SetAlgoVlined
			\SetKwInOut{Input}{Input}
			\SetKwInOut{Output}{Output}
			\SetKwFunction{Linecross}{Linecross}
			\SetKwFunction{Polygon}{Polygon}
			\Input{$N$, $z_r$, $\mathcal{V}_N^0$, $\mathcal{R}_N$}
			\Output{$\mathbb{P}_o$, $\mathbb{V}_o$, $\mathbb{I}_t$.}
			
			\nl		$\mathbb{P}_o =  \emptyset, \mathbb{V}_o = \emptyset, \mathbb{I}_t = \emptyset$
			
			\eIf{ \rm $\texttt{FormationFeasible}(\mathcal{V}_N^0,\mathcal{R}_N)$}{
				
				\For{$k = 3$ \textbf{to} $N$ }	{		
					\For{$j=1$ to $C_N^k$}{
						\nl		$ \mathcal{I}_t = \texttt{NChooseK}(N,k)$ \;
						{\color{black} Obtain $\bm{A}_1, \bm{b}_1$ by~\eqref{eq_A1}, 
						$\bar{\bm{A}}_1= [\bm{A}_1 \; \bm{b}_1]$ \;} 
						\nl		\If{$\rank(\bm{A}_1) = \rank(\bar{\bm{A}}_1)$}{
							\nl			Obtain $\bm{x}$ by~\eqref{eq_CQP} and $z_o$ by~\eqref{eq_zo}\;
							{\color{black}\nl          \If{$\bm{x} \neq \emptyset$ \textbf{and} $z_o >0$}{
								$(\bm{v}_o,\bm{p}_o) \gets (\bm{x},z_o)$\;
								\nl				\If{\rm\texttt{ForceClosure}$ (\bm{v}_o,\bm{p}_o)=\textit{True}$}{
									\nl					 $\mathbb{P}_o = \mathbb{P}_o \cup {\bm{p}}_o$, $\mathbb{V}_o = \mathbb{V}_o \cup \bm{v}_o$,
									$\mathbb{I}_t = \mathbb{I}_t \cup \mathcal{I}_t$ \;
								}
							}}
						}
					}		
				}
				{\color{black}\nl	 \Return  $\mathbb{P}_o$, $\mathbb{V}_o$, $\mathbb{I}_t$.}
			}{
				\nl \Return  $\textit{False}$
			} 
	\end{algorithm}
	
	Algorithm~\ref{Al_FK} traverses all taut cable sets to ensure the completeness of forward kinematics computation. The total number of loops of the algorithm is $\sum_{k=3}^{N} C_{N}^k$. We briefly discuss the computational complexity of each loop. After entering the loop, the worst time complexity of determining condition~\eqref{eq_rank} is $O(k^2)$. If the form closure condition is met, the complexity of constructing~\eqref{eq_A11} is $O(k^2)$ and the calculation of CQP~\eqref{eq_xbar} is $O(1)$. Subsequently, the complexity of {\em Step 4} is $O(k)$. Therefore, the total complexity of each loop is $O(k^2)$. By the above analysis, the main contribution of complexity is the number of combinations (loops). Fortunately, if the condition~\eqref{eq_rank} is satisfied, the CQP does not have a feasible solution and then a new loop search starts. Therefore, the algorithm {\color{black} completes} all combination searches fast and we will analyze the improvement of algorithm efficiency based on {\color{black} case study} examples in the next section.
	
	\renewcommand{\arraystretch}{1.2}
	\setlength{\tabcolsep}{0.055in}
	\begin{table*}[ht!]
		\centering
		\caption{ \color {black}{Experiment Results and Errors of FK Solutions For Example 1 } ($N=4$, $z_r = 0.8$~m, ${\text{Error}} = \frac{\|\bs{p}_{o}^{\text{real}} - \bs{p}_o^{\text{sim}}\|}{\|\bs{p}_o\|^{\text{sim}}} \times 100\%$)}
		\vspace{-1mm}
		\begin{tabular}{c c c c c c c c}
			\toprule[1.2 pt]
			{Taut cable set}  & \multicolumn{3}{c}{ $\bm{p}_o$ (m)} & \multicolumn{4}{c}{ Taut Cable Length (m)} \\
			$\mathcal{I}_t$			&	$x_{o}$	&	$y_{o}$	&  $z_{o}$	&	No. 1	&	No. 2	&	No. 3	&	No. 4\\
			\midrule
			$\{1,2,3\}$ 	&	$0.571$ $(0.35\%)$	&	$0.320$ $(-1.56\%)$	&	$0.143$ $(4.20\%)$		&	$0.773$ $(-0.78\%)$	&	$0.752$ $(-1.06\%)$	&	$0.772$ $(-0.65\%)$	&	-- \\	
			$\{1,3,4\}$ 					  	&	$0.566$ $(1.41\%)$	&	$0.341$ $(0.00\%)$ &	$0.144$ $(4.86\%)$	& $0.776$ $(-0.26\%)$	&	--	& $0.767$ $(-1.17\%)$ & $0.766$ $(-0.65\%)$ \\	
			$\{1,2,3,4\}$ 					   &   $0.462$ $(6.06\%)$	&	$0.275$ $(13.09\%)$	& $0.158$ $(4.43\%)$ &	$0.706$ $(1.42\%)$	&$0.766$ $(-1.04\%)$	&$0.827$ $(-4.00\%)$	&	$0.779$ $(-3.47\%)$ \\	
			\bottomrule[1.2 pt]  
		\end{tabular}
		\label{tb_4robot_exp}
		\vspace{-2mm}
	\end{table*}
	
	{\color{black}
		\section{Experiment and Case Study} 
		\label{sec_4}
		In this section, we present experimental results and also a few case study examples to validate and demonstrate the FK analysis and algorithm. 
	}
		
		\subsection{Experimental Setup}	
		{\color{black}
		We constructed a four-robot team to verify the effectiveness of the proposed FK algorithm, as shown in Fig.~\ref{fig_setup}(a).} To satisfy the VVCM conditions and help build an experimental platform, we chose a solid metal ball (radius $25$~mm) as the handling object. A soft plastic cloth was selected as shown in Fig.~\ref{fig_setup}(b), and $\mathcal{V}_N^0$ is listed in Fig.~\ref{fig_4robot}. A rod was mounted on each robot and the deformable sheet was held on the tip of the rod. The height of holding points was $z_r = 0.8$~m in experiment. The elastic deformation of the sheet during the experiment can be ignored. The position of the holding points and the object were measured by the motion capture system (6 cameras from NOKOV) at a rate of $60$~Hz. The robot {\color{black}had} a differential wheel structure driven by two stepper motors with an embedded system (Arduino UNO R3). {\color{black} The communication among robots was implemented by the ZigBee wireless network protocol for low-speed and short-distance transmission.} 
	
	{\color{black} 
		\subsection{Experimental and Case Study Results}	
		
		We present the results from four examples in this section. Example 1 was implemented on the physical robotic system to validate effectiveness of the FK algorithm. Then, Algorithm~\ref{Al_FK} was applied to Example 2 to demonstrate completeness of the FW method and to Example 3 to illustrate efficiency. Finally, in Example 4, we discuss the existence and variety of the FK solutions with more than five taut cables, and provide a glimpse of the complexity of the inverse kinematics problem. {\color{black}Because the experimental platform has limited four robots, we used the case study examples to demonstrate the ability of the FK algorithm over robot teams with a large amount of robots.}}
	
	\subsubsection{\color{black} Effectiveness of the FK algorithm}	
	
	Fig.~\ref{fig_4robot} shows the experimental results of different combinations of taut cable sets $\mathcal{I}_t$ for the four-robot team. The positions of the robots $\mathcal{R}_N$ and sheet vertices $\mathcal{V}_N^0$ are shown in the figure. {\color{black}There are three FK solutions in this example, and each column of Fig.~\ref{fig_4robot} represents a solution with different $\mathcal{I}_t$. Fig.~\ref{fig_4robot}(a) and~\ref{fig_4robot}(b) show the computational results by Algorithm~\ref{Al_FK}, and the corresponding experimental system configurations are illustrated in Fig.~\ref{fig_4robot}(c).}
	
	\begin{figure}[h!]	
		\centering    	
		\includegraphics[width=\columnwidth]{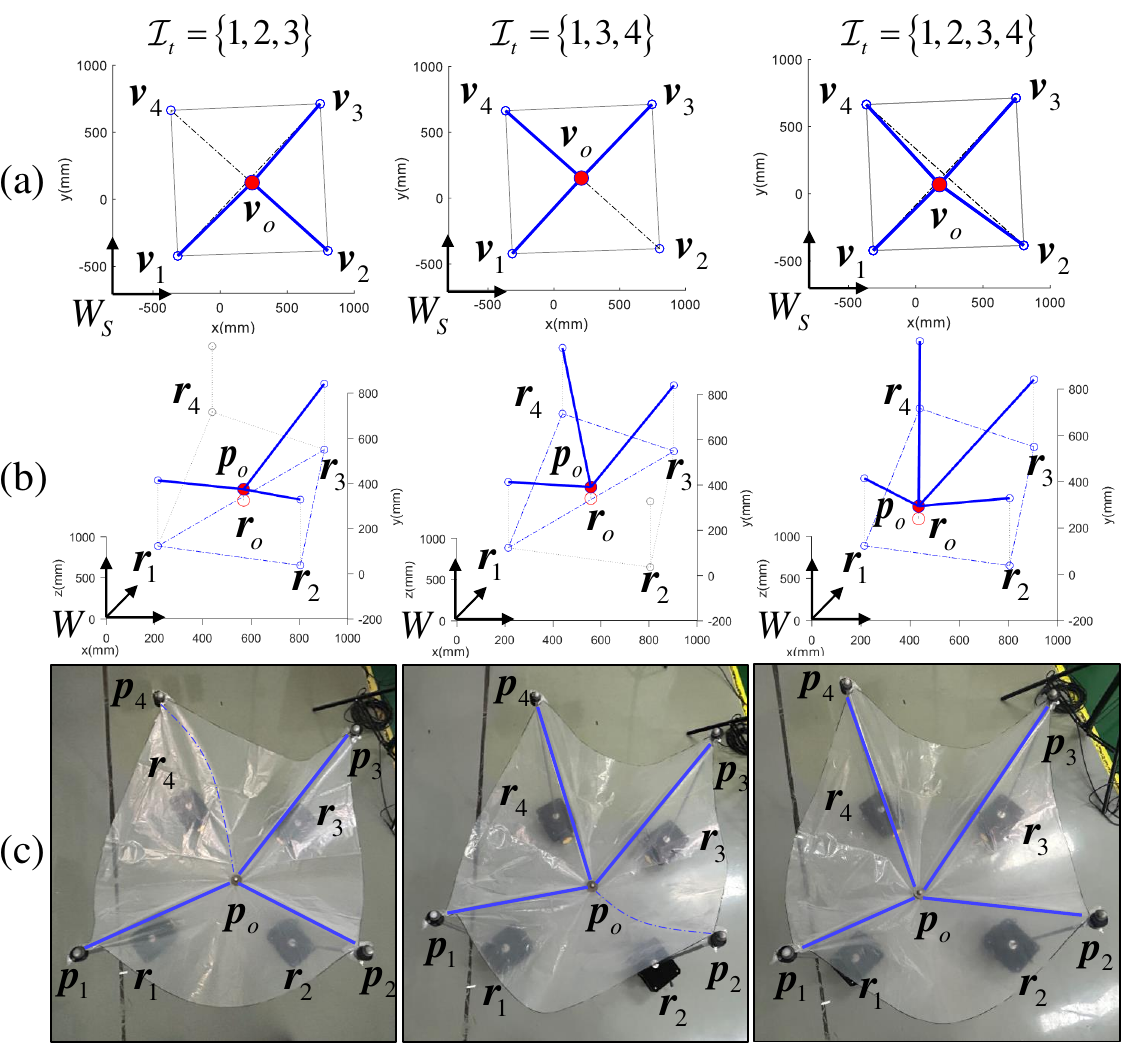}	
		\vspace{-4mm}
		\caption{{\color{black}Example 1: Experimental and computational results for three FK solutions of the four-robot formation to handle an object with a deformable sheet.} $\mathcal{V}_N^0=[-0.32 \; -0.42; \; 0.80 \; -0.38; \; 0.75 \; 0.71; \; -0.37 \; 0.66]^T$~m, and $\mathcal{R}_N=[0.21 \; 0.12; \; 0.80 \; 0.04; \; 0.90 \; 0.55; \; 0.44 \; 0.72]^T$~m. A straight line segment (taut cable) is formed when the plastic sheet is subject to tension {\color{black}and marked by a blue solid line. (a) The contact point $\bm{v}_o$ between the object and the sheet in $\mathcal{W}_S$. (b) The object position $\bm{p}_o$ and robot configuration $\mathcal{R}_N$ in $\mathcal{W}$. (c) The snapshots of experimental system configuration.}} 
		\label{fig_4robot} 
	\end{figure}
	
	Table~\ref{tb_4robot_exp} lists the experimental results of Example 1 and the comparison with the {\color{black} computational} results. The errors are mainly caused by position errors of the {\color{black} optical markers with} diameter $1$~cm. Most of the errors are within $1$~cm. The large errors on the {\color{black}$X$- and $Y$-axis directions with $\mathcal{I}_t=\{1,2,3,4\}$} are $6.06\%$ and $13.09\%$, respectively. This is because, among all three {\color{black} solutions, the height $z_o$ of the object with $\mathcal{I}_t=\{1,2,3\}$ and $\mathcal{I}_t=\{1,3,4\}$ was lower than that with $\mathcal{I}_t=\{1,2,3,4\}$} and therefore, the potential energy of the former {\color{black}cases were} lower than that of the latter case. There was a tendency of the object to roll towards the other two FK solutions and this resulted in large error. {\color{black}This implies that equilibrium states with lower potential energy are more stable and more suitable for transporting tasks.} 
	
	\begin{figure*}[ht!]
		\centering
		
		\subfigure[]{
			\label{fig_fk1}
			\includegraphics[height=1.25in]{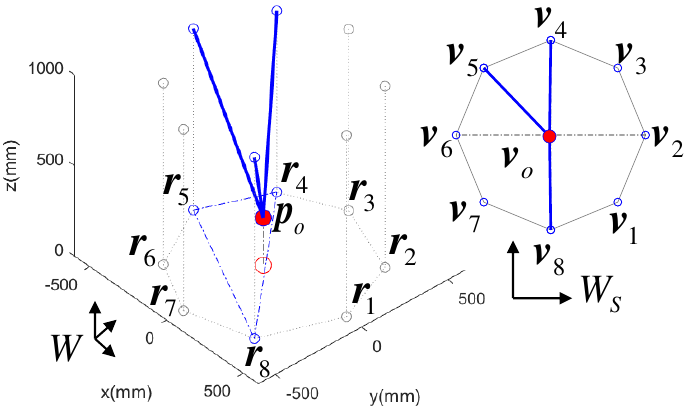}}
		\hspace{-1mm}
		\subfigure[]{
			\label{fig_fk2}
			\includegraphics[height=1.25in]{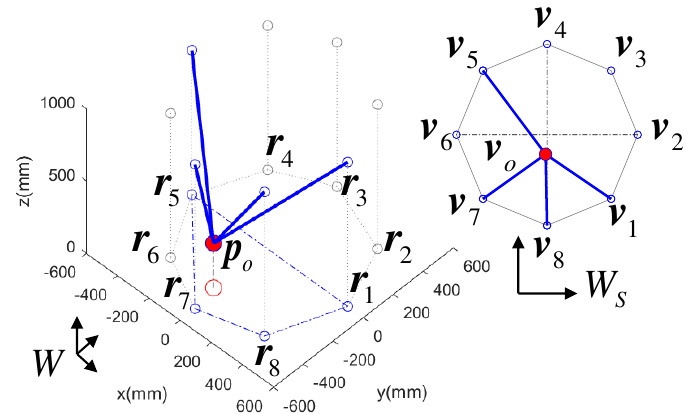}}
		\hspace{-1mm}		
		\subfigure[]{
			\label{fig_fk3}
			\includegraphics[height=1.25in]{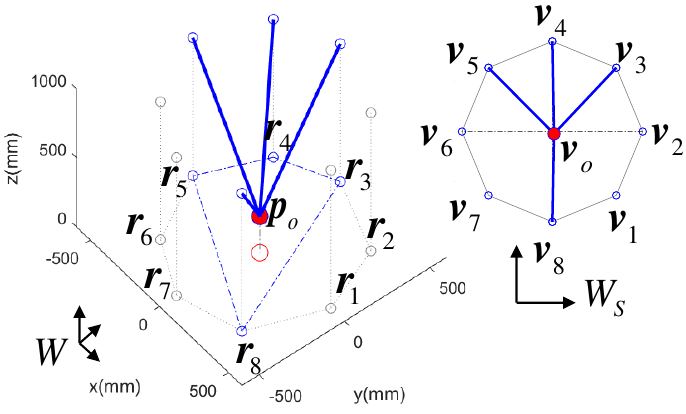}}	
		\vspace{-3mm}
		
		\subfigure[]{
			\label{fig_fk4}
			\includegraphics[height=1.25in]{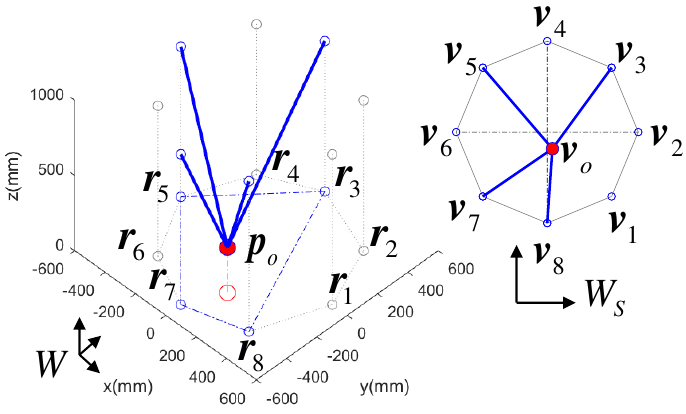}}		
		\hspace{-1mm}
		\subfigure[]{
			\label{fig_fk5}
			\includegraphics[height=1.25in]{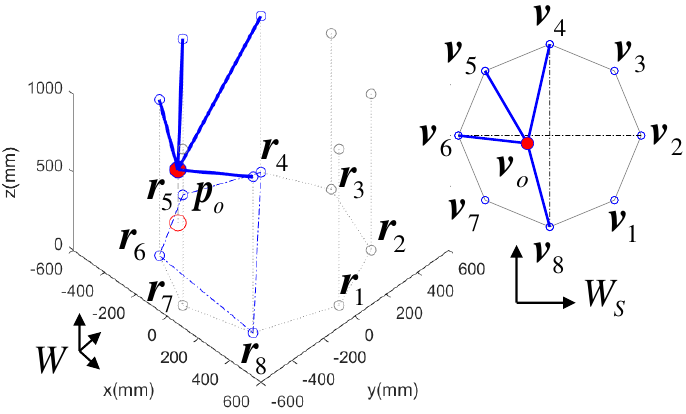}}
		\hspace{-1mm}
		\subfigure[]{
			\label{fig_fk6}
			\includegraphics[height=1.25in]{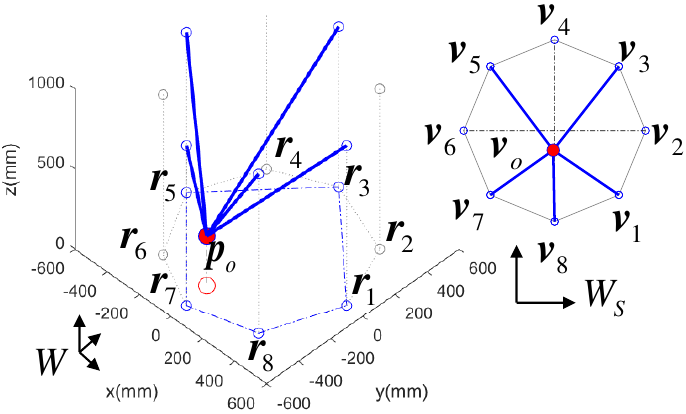}}	
		\caption{{\color{black} Example 2: A case study for an eight-robot formation. Each sub-figure contains the robot formation configuration $\mathcal{R}_N$ and the object position $\bm{p}_o$ in $\mathcal{W}$ (left), and the contact point with the sheet $\bm{v}_o$ in $\mathcal{W}_S$ (right).} The solid blue line represents the taut cable and the red solid points represents positions $\bm{p}_o$ and $\bm{v}_o$. The sheet shape is a regular octagon, and the radius of its circumscribed circle is $0.9$~m. The blue dotted line indicates the sub-formation of the robots involved in the handling the object. $\mathcal{R}_N=[0.5 \; 0;\; 0.35 \; 0.35; \; -0.05 \; 0.5; \; -0.35 \; 0.35; \; -0.50 \; 0; \; -0.30 \; -0.35; \; 0 \; -0.50; \; 0.35 \; -0.40]^T$~m. Taut cable sets $\mathcal{I}_t$ are (a) $\{4,5,8\}$. (b) $\{1,5,7,8\}$. (c) $\{3, 4,5, 8\}$. (d) $\{3,5,7,8\}$. (e) $\{4,5,6,8\}$. (f) $\{1,3,5,7,8\}$.}
		\label{fig_fk}     
		\vspace{-0mm}
	\end{figure*} 
	
	\subsubsection{\color{black} Completeness of the FK algorithm}
	
	{\color{black} Example 2 includes an eight-robot team ($N = 8$) to verify the completeness of FK Algorithm.} The height of the holding point {\color{black} was set at $z_r=1$~m}. The shape of the sheet was selected as a regular octagon, and the radius of its circumscribed circle was $r_s=0.9$~m. Fig.~\ref{fig_fk} shows the results of $\mathcal{R}_N$ and $\mathcal{V}_N^0$. Similar to the previous example, Algorithm~\ref{Al_FK} found all feasible equilibrium states and Table~\ref{tb_2} lists the number of valid $\mathcal{I}_t$ after each step of the algorithm. The last column indicates whether the step {\color{black} produces} a solution after computation. 
	
	\renewcommand{\arraystretch}{1.2}
	\setlength{\tabcolsep}{0.09in}
	\begin{table}[h!]
		\centering
		\caption{The valid $\mathcal{I}_k$ after each step of Algorithm 1 in Example 2}
		\vspace{-1mm}
		\begin{tabular}{ c l c c c}
			\toprule[1.2pt]
			Step	&	{Conditions to be met}	& \multicolumn{2}{c}{Element number } 		& ($\bm{p}_o, \bm{v}_o$)\\
			&&\multicolumn{2}{c}{ $k=|\mathcal{I}_t|$} & \\
			\midrule
			$1$	&	All taut cable combinations 			   &  $219$   & $(100\%)$ & $\times$\\
			$2$	&	Form closure feasible  	 	&  $182$  & $(83.11\%)$ 	& $\times$\\
			$3$	&	CQP feasible		   			 &  $22$ & $(10.05\%)$ 	& $\checkmark$\\
			$4$	&	Force closure feasible 		 &  $6$  & $(2.74\%)$	& $\checkmark$\\
			\bottomrule[1.2pt]  
		\end{tabular}
		\label{tb_2}
	\end{table}
	
	From the results in Table~\ref{tb_2}, we observe that there are $219$ combinations of possible taut cables sets in $\mathbb{I}_t$. Out of these sets, $37$ combinations did not meet the form closure condition~\eqref{eq_rank} and were then eliminated. In {\em Step 3} of the algorithm, we computed and obtained solution ${\bm{x}}$ and these solutions need to satisfy the constraints that the rest of the cables are slack~\eqref{eq_A2} and the height of the object is feasible, $z_o \in (0, z_r)$. Therefore, after solving the CQP, only $22$ solutions $(\bm{p}_o,\bm{v}_o)$ were feasible out of the $182$ solutions. Finally, we verified the force closure condition of these 22 solutions and obtained the following six FK solutions. We list all FK solutions {\color{black}$\mathbb{P}_o$, $\mathbb{V}_o$ and the collection of the taut cable set $\mathbb{I}_t$} in Table~\ref{tb_3}.
	
	\renewcommand{\arraystretch}{1.2}
	\setlength{\tabcolsep}{0.08in}
	\begin{table}[h!]
		\vspace{-0mm}
		\centering
		\caption{FK solutions and results for Example 2 ($N=8$, $z_r = 1 m$)}
		\vspace{-1mm}
		\begin{tabular}{ c c c c c c}
			\toprule[1.2pt]
			{Taut Cable Group} & \multicolumn{2}{c}{$\bm{v}_o$~(mm)}  & \multicolumn{3}{c}{ $\bm{p}_o$~(mm)}  \\
			$\mathcal{I}_t$		&	$x_{vo}$	&	$y_{vo}$	&	$x_{o}$	&	$y_{o}$	&  $z_{o}$	\\
			\midrule
			$\{4,5,8\}$ 	& $-12.8$	& $-13.0$ &	$-8.6$	& $-30.5$ &	$261.8$ \\	
			$\{1,5,7,8\}$ 	& $-14.8$	& $-193.2$	& $-26.6$	& $-347.8$	& $310.2$ \\	
			$\{3,4,5,8\}$ 	& $16.4$	& $-22.2$	& $43.3$	& $-46.9$ &	$263.2$ \\	
			$\{3,5,7,8\}$ 	& $51.4$	& $-165.0$	& $98.7$	& $-290.9$	& $300.5$\\		
			$\{4,5,6,8\}$ 	& $-220.7$	& $-76.7$	& $-389.3$	& $-142.0$	& $340.6$ \\		
			$\{1,3,5,7,8\}$ & $-13.9$	& $-193.3$	& $-25.0$	& $-348.0$	& $310.2$ \\	
			\bottomrule[1.2pt]  
		\end{tabular}
		\label{tb_3}
		\vspace{-3mm}
	\end{table}
	
	\renewcommand{\arraystretch}{1.2}
	\setlength{\tabcolsep}{0.09in}
	\begin{table*}[h!]
		\centering
		\begin{threeparttable}
			\caption{\color{black} FK computational results of the three multi-robot system in Example 3}
			\label{tb_4}
			\vspace{2mm}
			\begin{tabular}{ c c c c c c c c c}
				\toprule[1.2pt]
				{Robot number }  & Taut cable set & Form closure feasible & Reduction rate  & CQP feasible & Force closure feasible &\multicolumn{3}{c}{Taut cable number }\\
				$N$		& (Step 1), $M_1$	& (Step 2), $M_2$	&$\frac{M_1 - M_2}{M_1}$	&	(Step 3), $M_3$	&(Step 4), $M_4 = M$ &	$k=3$	&	$k=4$	&$k=5$\\
				\midrule
				$10${\color{black}\textsuperscript{(a)}}	& $968$	&$582$	&$39.9$\%	&	$34$ & $5$ & $0$ & $2$ & $3$ \\
				$15${\color{black}\textsuperscript{(b)}}	& $32,647$  & $4,823$	& $85.2$\%	& $93$ & $2$ & $1$ & $1$ & $0$\\
				$20${\color{black}\textsuperscript{(c)}}	& $1,048,365$ & $21,489$ & $98.0$\%	& $152$	& $13$ & $1$ & $6$	&6\\		    				
				\bottomrule[1.2pt]  
			\end{tabular}
			\begin{tablenotes}
				\item[{\color{black}(a)}] $\mathcal{V}_N^0=[27 \; 7; \; 54 \; 2; \; 85 \; 7; \; 98 \; 36; \; 96 \; 65; \; 76 \; 93; \; 44 \; 96; \; 15\; 75; \; 7 \; 48; \; 12 \; 22]^T$~cm, $\mathcal{R}_N=[45\; 14; \;64 \; 14; \; 80 \; 27; \; 81 \; 47; \; 78 \; 64; \; 66 \; 76; \; 49 \; 78; \; 33 \; 66;\; 27 \; 43; \; 30 \; 23]^T$~cm.
				\item[{\color{black}(b)}] $\mathcal{V}_N^0=[50 \; 4; \; 65 \; 5; \; 80 \; 12; \; 91 \; 24; \; 94 \; 43; \; 91 \; 63; \; 80 \; 80; \; 65 \; 91; \; 46 \; 94; \; 28 \; 84; \; 15 \; 69; \; 11 \; 51; \; 11 \; 30; \; 18 \; 10; \; 33 \; 3]^T$~cm, $\mathcal{R}_N=[2 \; 13; \; 64 \; 15; \; 72\; 21; \; 77\; 32; \; 79 \; 47; \; 76 \; 59; \; 68 \; 67; \; 59\; 72; \; 48 \; 73; \; 37 \; 66; \; 30 \; 55; \; 26 \; 42;\; 26\; 30; \; 29 \; 19; \; 40 \; 12]^T$~cm.
				\item[{\color{black}(c)}] $\mathcal{V}_N^0=[51 \; 6;\; 62 \; 6; \; 73 \; 9; \; 82 \; 15;\; 88 \; 25; \; 92 \; 39; \; 93 \; 53; \; 91 \; 67;\; 85 \; 80; \; 77 \; 89; \; 67 \; 95; \; 56 \; 97; \; 45 \; 94; \; 35 \; 88; \; 27 \; 78; \; 20 \; 62; \; 19 \; 47; \;$ $ 21 \; 33; \; 27 \; 20; 38 \; 11]^T$~cm, $\mathcal{R}_N=[58 \; 16; \; 64 \; 18; \; 68 \; 23; \; 72 \; 29;\; 74 \; 36; \; 77 \; 45; \; 77 \; 54; \; 75 \; 65; \; 71 \; 71; \; 66 \; 75; \; 60 \; 77; \; 55\; 77; \; 50 \; 73; \; 46 \; 64; \; 46 \; 55; $ $\; 47 \; 46; \; 48 \;37; \; 50 \; 30; \; 52 \; 24; \; 54 \; 19]^T$~cm.
			\end{tablenotes}
		\end{threeparttable}
		\vspace{-2mm}
	\end{table*}
	
	Out of these six solutions, 1 solution is with three taut cables, 4 solutions are with four taut cables, and 1 solution is with five taut cables. Fig.~\ref{fig_fk} shows the robot formation and positions of these results, which demonstrates 6 equilibrium states under the same robot formation. It is interesting to observe that both Fig.~\ref{fig_fk3} and~\ref{fig_fk5} contain the taut cable set $\{4,5,8\}$, which is also the taut cables in Fig.~\ref{fig_fk1}. These three solutions are different because of various $\bm{A}_{11}$ and $\bm{b}_{11}$ in~\eqref{eq_xbar} for given different $\mathcal{I}_t$. Same observation is obtained from three solutions in Fig.~\ref{fig_fk2},~\ref{fig_fk4}, and~\ref{fig_fk6}, in which {\color{black} the sets $\mathcal{I}_t$s} all contain $\{5,7,8\}$. Among all six solutions, the results in Fig.~\ref{fig_fk1} is with the lowest potential energy.
	
	\vspace{-0mm}
	\subsubsection{\color{black} Efficiency of the FK algorithm}
	
	{\color{black} In Example 2, we found that each step in the FK algorithm significantly reduced the number of solution candidates that were obtained in the previous step, that is, the third column of Table~\ref{tb_2}.} To further verify the efficiency improvement of the FK algorithm, {\color{black}we performed Example 3 to consider robot numbers $N = 10$, $15$, and $20$}. Table~\ref{tb_4} lists the initial sheet shape $\mathcal{V}_N^0$, robot formation $\mathcal{R}_N$, and the proportion of infeasible combinations computed in {\em Step 2} {\color{black} of Algorithm~\ref{Al_FK}}. The computed reduction rate in {\em Step 2} is listed in the fourth column in the table, that is, $39.9$\%, $85.2$\%, and $98.0$\% for $N = 10, 15, 20$, respectively. It is found that as the number of robots increases, {\em Step 2} significantly reduces the complexity of the algorithm and improves its efficiency.
	
	\begin{figure}[h!]
		\vspace{-2mm}
		\hspace{-4mm}
		\subfigure[]{
			\label{fig_k8}
			\includegraphics[width=1.77in]{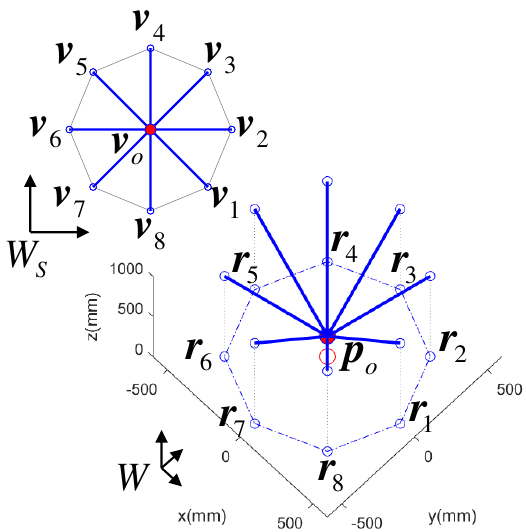}}
		\hspace{-5mm}
		\subfigure[]{
			\label{fig_inv}
			\includegraphics[width=1.8in]{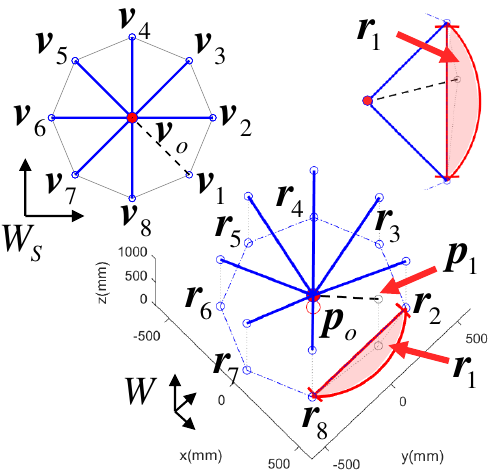}}	
		\vspace{-2mm}
		\caption{\color{black} Example 4: FK solution for 8 robots with more than five taut cables. $\mathcal{V}_N^0$ is the same as Example 2. $\bm{v}_o = [0 \,\; 0]^T m$, $\bm{p}_o = [0 \; \, 0 \;\, 0.252]^T m$. (a) FK solution with eight taut cables and the robot position is $\bm{r}_i = [r_f \cos \frac{2 \pi (i-1)}{8} \;\, r_f \sin \frac{2 \pi (i-1)}{8}]^T$, $i=1,\cdots,8$, $r_f = 0.5$~m. (b) FK solution with seven taut cables and $\mathcal{I}_t =\{2,\cdots, 8\}$. $\bm{r}_1$ can move freely within the area enclosed by the red line, and the remaining robot positions are consistent with (a). In addition to the requirement that the robot formation is a convex polygon, the robot formation is feasible, $\| \bm{r}_i - \bm{r}_j\| <\| \bm{v}_i - \bm{v}_j\|$, $i,j=1,\ldots, 8$, $i \ne j$, and the virtual cable $1$ should be slack, that is, $\| \bm{p}_1 - \bm{p}_o\| <\| \bm{v}_1 - \bm{v}_o\|$.} 
		\label{fig_change}     
		\vspace{-3mm}
	\end{figure}

	\subsubsection{\color{black} FK solutions with more than 5 taut cables} 
	\label{sec_4d}
	
	{\color{black} In Examples 2 and 3, even for the cases where the number of robots $N>5$, \color{black} when $\mathcal{V}_N^0$ and $\mathcal{R}_N$ are not deliberately selected, the taut cable number of the FK solutions is equal to or less than five, which confirms the analysis of the taut cable number in Section~\ref{sec_3b}. We now present Example 4 to further discuss the existence and diversity of FK solutions with more than five taut cables. Fig.~\ref{fig_k8} shows a case of eight robot formation designed by the results in Lemma~\ref{pro_regular} with all cables taut, where $k=8$ and $k_1=2$. As shown in~\eqref{eq_xbar}, $\bm{x}$ is determined by $\bm{A}_{11}$, $\bm{b}_{11}$ and $\bm{c}$. When we relax taut cables into slack status without changing~\eqref{eq_A11}, we obtain different $\mathcal{I}_t$ while keeping $\bm{p}_o$ and $\bm{v}_o$ unchanged. As shown in Fig.~\ref{fig_inv}, we relax the taut cable $i=1$ and obtain the FK solution with seven taut cables. $\bm{r}_1$ can move freely within the area enclosed by the red line, while $\bm{p}_o =[0 \; 0 \; 0.252]^T m$ and $\bm{v}_o= [0 \;\, 0]^T m$ are kept stationary. The results were verified by Algorithm~\ref{Al_FK}. Similarly, the taut cable $i=1$ can be replaced by others in $\mathcal{I}_t$, or multiple taut cables can be relaxed simultaneously. Therefore, selection of the FK solution with more than five cables is not unique and indeed quite flexible under non-strict conditions.
		
		Example 4 also provides a glimpse into the complexity of the inverse kinematics problem, namely, given the object position $\bm{p}_o$ and the contact point $\bm{v}_o$, we need to solve the robot formation $\mathcal{R}_N$. When $\bm{r}_1$ is changed in the enclosed area by the red line in Fig.~\ref{fig_inv}, $\bm{p}_o$ and $\bm{v}_o$ remain in the same position, that is, the robot formation $\mathcal{R}_N$ has infinite solutions with the same $\bm{p}_o$ and $\bm{v}_o$. We are working on the inverse kinematics problem as one ongoing research task.
	}
	
	\vspace{-0mm}
	\section{Conclusion} 
	\label{sec_5}
	
	In this paper, we extended the concept of {\color{black} virtual variable cable model (VVCM)} and presented a complete forward kinematics method for multi-robot-based object handling and {\color{black}transporting} with a deformable sheet. Users can choose the arbitrarily number and formation of robots, the heights of handling point of the sheet, and the shape of sheet as the inputs of the FK algorithm to obtain the feasible object positions in the world and sheet frames. The forward kinematics algorithm was built on a set of geometric and physical constraints and {\color{black} efficiently computed} the object position for possible real-time robot planning and control. Through the {\color{black} experimental and computational case study examples}, we {\color{black} validated and demonstrated} the effectiveness, completeness, efficiency of the FK method with variations of robot numbers and formation. {\color{black} We currently consider general objects shape and complex contact conditions such as line or surface contacts between the object and the sheet. Robot motion planning and control to allow the object to follow a trajectory is another future research direction.}

	\bibliographystyle{IEEEtran}
	\bibliography{References}

\begin{thebibliography}{10}
\providecommand{\url}[1]{#1}
\csname url@samestyle\endcsname
\providecommand{\newblock}{\relax}
\providecommand{\bibinfo}[2]{#2}
\providecommand{\BIBentrySTDinterwordspacing}{\spaceskip=0pt\relax}
\providecommand{\BIBentryALTinterwordstretchfactor}{4}
\providecommand{\BIBentryALTinterwordspacing}{\spaceskip=\fontdimen2\font plus
\BIBentryALTinterwordstretchfactor\fontdimen3\font minus
  \fontdimen4\font\relax}
\providecommand{\BIBforeignlanguage}[2]{{%
\expandafter\ifx\csname l@#1\endcsname\relax
\typeout{** WARNING: IEEEtran.bst: No hyphenation pattern has been}%
\typeout{** loaded for the language `#1'. Using the pattern for}%
\typeout{** the default language instead.}%
\else
\language=\csname l@#1\endcsname
\fi
#2}}
\providecommand{\BIBdecl}{\relax}
\BIBdecl

\bibitem{RoyTASE2005}
B.~Roy, A.~Basnajian, and H.~H. Asada, ``{Repositioning of a rigid body with a
  flexible sheet and its application to an automated rehabilitation bed},''
  \emph{{IEEE} Trans. Automat. Sci. Eng.}, vol.~2, no.~3, pp. 300--307, 2005.

\bibitem{hunte2019collaborative}
K.~Hunte and J.~Yi, ``Collaborative object manipulation through indirect
  control of a deformable sheet by a mobile robotic team,'' in \emph{Proc. IEEE
  Conf. Automat. Sci. Eng.}, {Vancouver, Canada}, 2019, pp. 1463--1468.

\bibitem{bai2016dexterous}
Y.~Bai, W.~Yu, and C.~K. Liu, ``Dexterous manipulation of cloth,'' in
  \emph{Comp. Graph. Forum}, vol.~35, no.~2, 2016, pp. 523--532.

\bibitem{zhu2022challenges}
J.~Zhu, A.~Cherubini, C.~Dune, D.~Navarro-Alarcon, F.~Alambeigi, D.~Berenson,
  F.~Ficuciello, K.~Harada, J.~Kober, X.~Li \emph{et~al.}, ``Challenges and
  outlook in robotic manipulation of deformable objects,'' \emph{{IEEE} Robot.
  Automat. Mag.}, vol.~29, no.~3, pp. 67--77, 2022.

\bibitem{mcconachie2020manipulating}
D.~McConachie, A.~Dobson, M.~Ruan, and D.~Berenson, ``Manipulating deformable
  objects by interleaving prediction, planning, and control,'' \emph{Int. J.
  Robot. Res.}, vol.~39, no.~8, pp. 957--982, 2020.

\bibitem{herguedas2019survey}
R.~Herguedas, G.~L{\'o}pez-Nicol{\'a}s, R.~Arag{\"u}{\'e}s, and
  C.~Sag{\"u}{\'e}s, ``Survey on multi-robot manipulation of deformable
  objects,'' in \emph{Proc. IEEE Int. Conf. Emerg. Technol. Factory Automat.},
  2019, pp. 977--984.

\bibitem{hunte2021collaborative}
K.~Hunte and J.~Yi, ``Collaborative manipulation of spherical-shape objects
  with a deformable sheet held by a mobile robotic team,''
  \emph{IFAC-PapersOnLine}, vol.~54, no.~20, pp. 437--442, 2021.

\bibitem{hunte2022pose}
------, ``Pose control of a spherical object held by deformable sheet with
  multiple robots,'' \emph{IFAC-PapersOnLine}, vol.~55, no.~37, pp. 414--419,
  2022.

\bibitem{capua2011spiderbot}
A.~Capua, A.~Shapiro, and S.~Shoval, ``Spiderbot: a cable suspended mobile
  robot,'' in \emph{Proc. IEEE Int. Conf. Robot. Autom.}, Shanghai, China,
  2011, pp. 3437--3438.

\bibitem{hu2022multi}
J.~Hu, W.~Liu, H.~Zhang, J.~Yi, and Z.~Xiong, ``Multi-robot object transport
  motion planning with a deformable sheet,'' \emph{{IEEE} Robot. Automat.
  Lett.}, vol.~7, no.~4, pp. 9350--9357, 2022.

\bibitem{capua2009motion}
A.~Capua, A.~Shapiro, and S.~Shoval, ``Motion analysis of an underconstrained
  cable suspended mobile robot,'' in \emph{Proc. IEEE Int. Conf. Robot.
  Biomimet.}, Guilin, China, 2009, pp. 788--793.

\bibitem{oh2005cable}
S.-R. Oh and S.~K. Agrawal, ``Cable suspended planar robots with redundant
  cables: Controllers with positive tensions,'' \emph{{IEEE} Trans. Robotics},
  vol.~21, no.~3, pp. 457--465, 2005.

\bibitem{qian2018review}
S.~Qian, B.~Zi, W.-W. Shang, and Q.-S. Xu, ``A review on cable-driven parallel
  robots,'' \emph{Chinese J. Mech. Eng.}, vol.~31, no.~1, pp. 1--11, 2018.

\bibitem{carricato2012stability}
M.~Carricato and J.-P. Merlet, ``Stability analysis of underconstrained
  cable-driven parallel robots,'' \emph{{IEEE} Trans. Robotics}, vol.~29,
  no.~1, pp. 288--296, 2012.

\bibitem{pott2010algorithm}
A.~Pott, ``An algorithm for real-time forward kinematics of cable-driven
  parallel robots,'' in \emph{{Advances Robot Kinematics: Motion in Man and
  Machine}}, J.~Lenarcic and M.~M. Stanisic, Eds., 2010, pp. 529--538.

\bibitem{an2022novel}
H.~An, H.~Yuan, K.~Tang, W.~Xu, and X.~Wang, ``A novel cable-driven parallel
  robot with movable anchor points capable for obstacle environments,''
  \emph{{IEEE/ASME} Trans. Mechatronics}, vol.~27, no.~6, pp. 5472--5483, 2022.

\bibitem{merlet2021maximal}
J.-P. Merlet, ``Maximal cable tensions of a n-1 cable-driven parallel robot
  with elastic or ideal cables,'' in \emph{Int. Conf. Cable-Driven Parall.
  Robots}, 2021, pp. 79--89.

\bibitem{collard2013computing}
J.-F. Collard and P.~Cardou, ``Computing the lowest equilibrium pose of a
  cable-suspended rigid body,'' \emph{Optim. Eng.}, vol.~14, pp. 457--476,
  2013.

\bibitem{diao2007method}
X.~Diao and O.~Ma, ``A method of verifying force-closure condition for general
  cable manipulators with seven cables,'' \emph{Mech. Mach. Theory}, vol.~42,
  no.~12, pp. 1563--1576, 2007.

\bibitem{pham2006force}
C.~B. Pham, S.~H. Yeo, G.~Yang, M.~S. Kurbanhusen, and I.-M. Chen,
  ``Force-closure workspace analysis of cable-driven parallel mechanisms,''
  \emph{Mech. Mach. Theory}, vol.~41, no.~1, pp. 53--69, 2006.

\bibitem{fletcher2000practical}
R.~Fletcher, \emph{{Practical Methods of Optimization}}.\hskip 1em plus 0.5em
  minus 0.4em\relax John Wiley \& Sons, 2000.

\bibitem{fenchel1953convex}
W.~Fenchel, \emph{{Convex Cones, Sets, and Functions}}.\hskip 1em plus 0.5em
  minus 0.4em\relax Department of Mathematics, Princeton University, 1953.

\end{thebibliography}
	
\end{document}